\documentclass[10pt]{article}

\usepackage[preprint]{tmlr}
\usepackage{amsmath,amssymb,amsthm}
\usepackage{booktabs}
\usepackage{graphicx}
\usepackage{subcaption}
\usepackage{float}
\usepackage{placeins}
\usepackage{enumitem}
\usepackage{multirow}
\usepackage{tabularx}
\usepackage{xcolor}
\usepackage{xspace}
\usepackage[hidelinks]{hyperref}
\usepackage{url}
\usepackage{arydshln}

\definecolor{sageblue}{RGB}{44,110,155}
\newcommand{\sagecp}{\textsc{SAGE-CP}\xspace}
\newcommand{\caldev}{\mathcal{D}_{\mathrm{dev}}}
\newcommand{\calcal}{\mathcal{D}_{\mathrm{cal}}}
\newcommand{\labels}{\mathcal{Y}}
\newcommand{\mincov}{\mathrm{MinCov}}
\newcommand{\takeaway}[1]{\par\smallskip\noindent\textbf{Takeaway.} #1\par\smallskip}
\newtheorem{theorem}{Theorem}
\newtheorem{proposition}[theorem]{Proposition}
\newtheorem{corollary}[theorem]{Corollary}
\newfloat{algorithm}{tbp}{loa}
\floatname{algorithm}{Algorithm}

\title{SAGE-CP: When Marginal Coverage Hides\\
Route-Conditional Failures in Adaptive VLM Ensembles}
\author{%
\centering
\name Mushir Akhtar \quad M. Tanveer \\
\addr Department of Mathematics, Indian Institute of Technology Indore, India \\
\normalfont\small
\texttt{mushirakhtar.ml@gmail.com} \quad \texttt{mtanveer@iiti.ac.in}}

\hypersetup{
  pdftitle={SAGE-CP: When Marginal Coverage Hides Route-Conditional Failures in Adaptive VLM Ensembles},
  pdfauthor={Mushir Akhtar and M. Tanveer}
}

\renewcommand{\headrulewidth}{0pt}
\fancypagestyle{firstpagepreprint}{%
  \fancyhf{}%
  \fancyfoot[L]{\small Preprint}%
  \fancyfoot[C]{\small\thepage}%
  \renewcommand{\headrulewidth}{0pt}%
  \renewcommand{\footrulewidth}{0pt}%
}

\begin{document}
\raggedbottom
\maketitle
\thispagestyle{firstpagepreprint}

\begin{abstract}
Adaptive vision-language model (VLM) ensembles save computation by stopping
early on easy images and escalating harder ones. Although a single conformal
threshold can make the final predictor marginally valid, it can still fail
severely along one of its computational routes. We characterize this failure precisely: if a route has prevalence \(\pi\), marginal \(1-\alpha\) coverage allows its conditional coverage to fall to \([1-\alpha/\pi]_+\).
To repair the mismatch, we learn the router on development data, freeze every
routing choice, and calibrate the returned score separately for early exits and
escalated inputs. We call this pipeline \emph{Sequential Adaptive
Group-Conditional Ensemble Conformal Prediction} (\sagecp). Its calibration
step is standard Mondrian split conformal prediction applied to the operational
groups created by the router. Under exchangeability, the resulting sets have
finite-sample coverage within each route.
Across ten benchmarks, four frozen VLMs, and five aligned splits, both global
calibration and \sagecp achieve $0.900$ marginal coverage at a $0.90$ target,
but their route behavior differs substantially. \sagecp raises mean
worst-route coverage from $0.623$ to $0.887$ and reduces the mean route gap from
$0.324$ to $0.025$, using the same routed scores and $1.177$ source calls per
image. The stronger guarantee increases macro mean set size from $1.893$ to
$2.212$; the sets nevertheless remain smaller than those of the full ensemble
($2.486$) while using $70.6\%$ fewer source calls. Sets grow on six datasets,
with the largest change from $1.044$ to $2.623$ labels on Flowers-102. The
route-reliability result persists across six fixed query fractions, while finer
exit-depth groups reveal the expected need for more calibration data on rare
routes. Finally, across five datasets, \sagecp reduces measured batch-one
latency by $58.8\%$--$63.4\%$ and gross GPU energy
by $55.7\%$--$57.5\%$ relative to full inference. Thus, routing provides the
computational savings, while route-stratified calibration attaches coverage to
the operational paths created by the router.
\end{abstract}

\section{Introduction}
\label{sec:introduction}

Vision-language models (VLMs) such as CLIP support zero-shot image
classification by comparing an image representation with text representations
of candidate labels \citep{radford2021learning}. Their behavior depends on the
visual backbone, pretraining corpus, and scaling strategy
\citep{cherti2023openclip,gadre2023datacomp,tschannen2025siglip2}. Consequently,
different frozen VLMs need not make the same errors. Combining their evidence
can produce a stronger predictor, but evaluating every source for every image
also fixes inference cost at that of the full ensemble.

Adaptive acquisition offers a more flexible operating point. A system can
evaluate a strong source first, stop when the evidence is sufficiently clear,
and acquire additional sources only for ambiguous inputs. Related work uses
score aggregation to conformalize multiple predictors
\citep{ochoa2024ensemble,luo2025averaging,alami2026symmetric}, while cascades and
early-exit methods reduce computation by allocating different amounts of
inference to different inputs
\citep{fisch2020cascade,jazbec2024early,dou2026cascade}. Combining these ideas
creates an adaptive ensemble in which an image's output and computational path
are determined jointly. Some images exit after one source, whereas others are
escalated through a longer sequence.

Conformal prediction (CP) can wrap a fixed predictor with a label set whose
marginal coverage is at least $1-\alpha$ under exchangeability
\citep{vovk2005algorithmic,angelopoulos2023gentle}. Work on conformal VLMs has
studied the effects of backbone and score choice, transductive adaptation,
localized thresholds, score variability, and test-time augmentation
\citep{fillioux2026are,silva2025zeroshot,fuchs2026localized,
zeng2026empirical,shanmugam2025tta}. These advances improve how a fixed VLM or
ensemble is calibrated and how efficiently it represents uncertainty. An
adaptive ensemble introduces a different question: does a marginal guarantee
remain informative for each population created by its routing policy?

The distinction matters because marginal coverage averages over the routing
decision. A system can overcover images that stop early while concentrating
most errors among escalated images, or it can fail in the opposite direction.
Although unrestricted pointwise conditional coverage is unavailable without
additional assumptions \citep{barber2021limits}, finite, prespecified groups can
receive meaningful conformal guarantees
\citep{vovk2012conditional,gibbs2025conditional,martinezgil2024groups,
bairaktari2025kandinsky}. The computational routes of a frozen adaptive
predictor provide such groups: they are observable without the test label and
correspond directly to the acquisition paths experienced at deployment.

Marginal validity can nevertheless hide an extreme route failure. We prove
that a route with prevalence at most $\alpha$ may have zero coverage while the
complete predictor remains marginally valid. The experiments show that this
possibility is practically relevant. At a $0.90$ target, global calibration
achieves $0.900$ mean marginal coverage across ten datasets, yet its mean
minimum-route coverage is only $0.623$. On Food-101, Oxford Pets, and
Caltech-101, the minimum falls to approximately $0.30$--$0.37$.

We address this mismatch by aligning calibration with the deployed computation.
We learn the complete routing policy on independent development data and freeze
all design choices before final calibration. Since route membership under the
frozen policy is observable without labels, standard Mondrian split CP can
calibrate the returned score separately within each route. We call the resulting
pipeline \emph{Sequential Adaptive Group-Conditional Ensemble Conformal
Prediction} (\sagecp).

This repair changes the calibration boundary rather than the predictor.
\sagecp does not retrain a VLM, alter the routed probabilities, or require
additional inference beyond the adaptive policy. Routing supplies the source,
latency, and energy savings, while route-stratified calibration attaches
coverage to the operational populations created by that routing. We formalize
the necessary separation between development and calibration and prove that a
finite routing policy learned on independent development data remains valid
after it is frozen.

Routes are therefore more than an arbitrary post-hoc taxonomy. A route decides
which sources are acquired and which score is returned, coupling group
membership to the computation itself. Another fixed partition can validly
protect a different declared population, but it need not protect the early-exit
and escalated populations. Our target is specifically operational: the
guarantee should hold along the acquisition paths that users experience.

Our experiments are organized around four research questions (RQs). \textbf{RQ1}
examines whether marginal coverage can conceal route-conditional failures.
\textbf{RQ2} evaluates whether route-specific calibration repairs these failures
without requiring full-ensemble inference. \textbf{RQ3} tests whether the
findings remain consistent across datasets, risk levels, calibration sizes,
route granularities, and routing policies. Finally, \textbf{RQ4} determines
whether fewer source evaluations translate into measurable reductions in
latency and energy consumption.

Our contributions are:
\begin{enumerate}[leftmargin=*,itemsep=2pt]

    \item A sharp prevalence-dependent characterization of what marginal
    conformal validity can hide. It quantifies the worst attainable coverage
    within a route and the largest attainable coverage gap between two routes
    as functions of route prevalence.

    \item A calibration-safe adaptive-VLM protocol based on Mondrian split
    conformal prediction, together with a finite-sample route-conditional
    coverage theorem. The finite routing policy, conformity score, source
    order, and route partition may be selected using independent development
    data and then frozen before calibration.

    \item A controlled five-split evaluation across ten datasets and four
    heterogeneous VLMs. At the same mean marginal coverage ($.900$), routed
    scores, and source usage, \sagecp increases mean minimum-route coverage
    from $.623$ to $.887$ and reduces the mean route-coverage gap from $.324$
    to $.025$. Descriptive paired stability intervals exclude zero in favor
    of \sagecp on eight datasets; on the remaining two, they include zero,
    indicating no consistent difference across splits.

    \item A real batch-one sequential deployment study across five datasets.
    Relative to full-ensemble inference, \sagecp reduces measured end-to-end
    latency by $58.8\%$--$63.4\%$ and sampled gross GPU energy by
    $55.7\%$--$57.5\%$. The measurements include image loading and decoding,
    preprocessing, device transfer, synchronized GPU inference, routing
    overhead, and peak memory.

\end{enumerate}

The remainder of the paper is organized as follows. Section~\ref{sec:related} positions the work within conformal classification and
adaptive inference. Sections~\ref{sec:problem} and~\ref{sec:method} formalize the
route-coverage failure and present \sagecp. Sections~\ref{sec:experiments}
and~\ref{sec:results} describe the evaluation and findings, followed by the
scope of the conclusions in Sections~\ref{sec:limitations}
and~\ref{sec:conclusion}.

\section{Related Work}
\label{sec:related}

\paragraph{Conformal classification and VLMs.}
Common conformal classification scores include the least-ambiguous set-valued
classifier (LAC) score, adaptive prediction sets (APS), and regularized adaptive
prediction sets (RAPS). These scores convert classifier probabilities into
finite-sample-valid label sets with different efficiency and adaptivity
properties \citep{romano2020classification,angelopoulos2021uncertainty}. A
systematic evaluation of conformal vision and vision-language foundation models
shows that the backbone, score, and adaptation strategy all affect coverage and
efficiency \citep{fillioux2026are}. Conf-OT adapts zero-shot logits through a
transductive optimal-transport procedure \citep{silva2025zeroshot}, whereas
localized VLM conformal prediction varies thresholds using feature-space
similarity around a query \citep{fuchs2026localized}. Empirical-Bayes
$r$-values incorporate variability across score realizations
\citep{zeng2026empirical}, and test-time augmentation can improve conformal
efficiency through a separately learned augmentation policy
\citep{shanmugam2025tta}. These methods address set efficiency, transfer, or
local adaptation. In contrast, we study coverage within the groups induced by
adaptive source acquisition.

\paragraph{Multiple predictors and adaptive computation.}
Score averaging and symmetric aggregation exploit multiple predictors while
preserving conformal validity \citep{ochoa2024ensemble,luo2025averaging,
alami2026symmetric}, but these approaches ordinarily acquire every constituent
output. Cascaded conformal inference instead prunes candidates across
increasingly expensive models \citep{fisch2020cascade}, and nested early-exit
prediction sets address consistency across intermediate exits
\citep{jazbec2024early}. Conformal Cascade uses conformal set size to defer
large language model (LLM) queries and provides a general $1-K\alpha$
accepting-tier guarantee, which becomes tighter under an additional
selection-preservation condition \citep{dou2026cascade}. Conformal
learning-to-defer quantifies uncertainty in a rejector that allocates decisions
between a model and a human expert \citep{fang2026learning}. \sagecp addresses
a complementary target: it returns a label set from the final adaptively
aggregated VLM evidence. The selection rule is frozen before calibration, and
the induced route groups are protected directly rather than through a per-stage
union bound.

\paragraph{Mondrian and conditional coverage.}
Unrestricted distribution-free pointwise conditional coverage is impossible
without strong assumptions \citep{barber2021limits}. Finite partitions permit
meaningful group guarantees, and several methods extend or learn such groups
\citep{vovk2012conditional,gibbs2025conditional,martinezgil2024groups,
bairaktari2025kandinsky}.
Hierarchical pooling can also stabilize class-conditional conformal prediction
when infrequent labels make classwise calibration data scarce
\citep{papaioannou2026robust}. Our finite-sample scarcity problem is analogous,
but the groups are computational routes rather than semantic classes.
Mondrian CP calibrates separately within the categories of a fixed taxonomy
\citep{vovk2005algorithmic}. Once the router is frozen, its routes form such a
taxonomy, and standard split Mondrian calibration produces exactly the same
thresholds and sets as \sagecp. Proposition~\ref{prop:mondrian-equivalence}
formalizes this identity. The conformal quantile itself is standard; the new
statistical target is the adaptive route. We characterize why this operational
partition matters, how severely marginal coverage can fail within it, and how
the routing policy can be learned without contaminating final calibration.

Unlike demographic strata or feature clusters, a route is part of the
predictor rather than side information about an input: it determines both the
acquired models and the returned score. This coupling makes route coverage a
direct deployment target when reliability is required along acquisition paths.
Routes are not universally superior to other partitions; a different fixed
taxonomy can protect a different declared population. In every case, the
development/calibration boundary is essential because the route, score, and
taxonomy must be fixed before the final conformal quantiles are fitted.

\section{Problem Formulation}
\label{sec:problem}

Let $\labels=\{1,\ldots,L\}$ and let $p_k(y\mid x)$ denote the probability
assigned to label $y$ by frozen source $k\in\{1,\ldots,K\}$. A routing policy
selects a source sequence and a stopping depth $R(x)$. It returns an aggregated
probability $\bar p_{R(x)}(\cdot\mid x)$ and nonconformity score
\begin{equation}
 A(x,y)=1-\bar p_{R(x)}(y\mid x).
 \label{eq:routed-score}
\end{equation}
Eq.~\eqref{eq:routed-score} is the score actually returned by the adaptive
system. We separate data used to design this system, $\caldev$, from the final
conformal calibration split $\calcal$. Test labels are used only for metrics.

The router induces an observable group $G(x)\in\mathcal G$, where
$\mathcal G$ is a finite route taxonomy. Our primary construction uses
$\mathcal G=\{0,1\}$, with $G(x)=0$ for a first-source exit and $G(x)=1$ after
escalation. For any prediction set $\widehat\Gamma(X)$, define the prevalence
and coverage of group $g$ as
\begin{align}
 \pi_g&=\Pr\{G(X)=g\},\\
 c_g&=\Pr\{Y\in\widehat\Gamma(X)\mid G(X)=g\}.
\end{align}

Marginal coverage controls only the weighted average
$\sum_g\pi_gc_g$ and therefore does not determine how errors are divided
between routes. Proposition~\ref{prop:marginal-route} characterizes the weakest
coverage that remains possible within any one route.

\subsection{The blind spot of marginal validity}

\begin{proposition}[Sharp marginal-to-route bound]
\label{prop:marginal-route}
If $\Pr\{Y\in\widehat\Gamma(X)\}\geq1-\alpha$, then every group with
$\pi_g>0$ satisfies
\begin{equation}
 c_g\geq\left[1-\frac{\alpha}{\pi_g}\right]_+.
 \label{eq:sharp-floor}
\end{equation}
The bound is attainable for every $\alpha$ and $\pi_g$.
\end{proposition}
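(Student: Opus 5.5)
The lower bound comes from the law of total probability applied to the miscoverage event. Write $E=\{Y\notin\widehat\Gamma(X)\}$, so that $\Pr(E)\le\alpha$ by hypothesis. For any group with $\pi_g>0$ we have
\begin{equation*}
\pi_g(1-c_g)=\Pr\{E,\,G(X)=g\}\le\Pr(E)\le\alpha .
\end{equation*}
Dividing by $\pi_g$ gives $c_g\ge 1-\alpha/\pi_g$. Combined with the trivial bound $c_g\ge0$, this yields \eqref{eq:sharp-floor}. The bound uses only the nonnegativity of the miscoverage mass in the other groups, which also tells us what an extremal example must look like: all of the allowed $\alpha$ miscoverage mass should sit inside group $g$.

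For attainability, I will fix $\alpha\in(0,1)$ and $\pi_g\in(0,1]$ and construct a joint law of $(X,Y)$, a grouping $G$, and a set-valued map $\widehat\Gamma$ for which marginal coverage is at least $1-\alpha$ and $c_g$ equals $[1-\alpha/\pi_g]_+$ exactly. A convenient choice is to let $X$ contain an independent uniform variable $U\in[0,1]$ that determines both the group and whether coverage occurs. Set $G=g$ if $U\le\pi_g$, and put all remaining mass in a second group, or nowhere if $\pi_g=1$.

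There are two cases.
\begin{itemize}
\item If $\pi_g\ge\alpha$, let $\widehat\Gamma(X)=\emptyset$ when $U\le\alpha$ and $\widehat\Gamma(X)=\labels$ otherwise. Then the miscoverage mass is exactly $\alpha$, all of it lies in group $g$, and $c_g=(\pi_g-\alpha)/\pi_g=1-\alpha/\pi_g$. Marginal coverage is exactly $1-\alpha$.
\item If $\pi_g<\alpha$, let $\widehat\Gamma=\emptyset$ on all of group $g$ and $\widehat\Gamma=\labels$ elsewhere. Then $c_g=0=[1-\alpha/\pi_g]_+$, and marginal coverage is $1-\pi_g>1-\alpha$.
\end{itemize}

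The step needing the most care is making the attainability claim match the paper's setting: the marginal guarantee must hold with the groups observable and $\widehat\Gamma$ a legitimate set-valued function of $X$. With the randomization $U$ built into $X$, both conditions hold. The $Y$-distribution is irrelevant because the sets are either $\emptyset$ or all of $\labels$. Boundary cases also need handling: if $\pi_g=1$ the bound reduces to $c_g\ge1-\alpha$, which the first case attains directly, and $\alpha\ge1$ is trivial.

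As an optional remark, the same construction shows that the complementary group has coverage $1$. The gap between the two routes therefore reaches $\min\{1,\alpha/\pi_g\}$, which matches the gap statement in the contributions.
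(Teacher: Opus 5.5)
Your proof is correct and follows essentially the paper's argument. Your miscoverage-mass inequality $\pi_g(1-c_g)\le\alpha$ is the complement of the paper's $1-\alpha\le\pi_g c_g+1-\pi_g$, and your attainability construction, an independent uniform variable that pushes all miscoverage into group $g$, matches the paper's. The only cosmetic difference is the small-prevalence case: when $\pi_g\le\alpha$ the paper lowers the complementary coverage to $(1-\alpha)/(1-\pi_g)$ so marginal coverage is exactly $1-\alpha$, whereas you leave it at one, which is equally valid because the hypothesis is only an inequality.
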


\begin{proof}
All other group coverages are at most one, hence
\[
1-\alpha\leq\sum_h\pi_hc_h\leq\pi_gc_g+1-\pi_g.
\]
Rearrangement and $c_g\geq0$ give~\eqref{eq:sharp-floor}. For sharpness, if
$\pi_g>\alpha$, set $c_g=1-\alpha/\pi_g$ and set all complementary coverage to
one. If $\pi_g\leq\alpha$, set $c_g=0$ and complementary coverage to
$(1-\alpha)/(1-\pi_g)\leq1$. A binary-label problem with an independent
uniform variable controlling inclusion realizes either construction.
\end{proof}

\begin{corollary}[Sharp binary route-gap bound]
For two groups with prevalences $\pi$ and $1-\pi$, let
$s=\min\{\pi,1-\pi\}$. Marginal $1-\alpha$ validity alone permits
\begin{equation}
 \sup|c_0-c_1|=\min\!\left\{1,\frac{\alpha}{s}\right\}.
 \label{eq:sharp-gap}
\end{equation}
\end{corollary}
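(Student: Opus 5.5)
The argument has two halves: an upper bound on $|c_0-c_1|$ valid for every marginally valid set, and a construction attaining it. Both rest on the single constraint $\pi c_0+(1-\pi)c_1\ge1-\alpha$ with $c_0,c_1\in[0,1]$. By symmetry, relabel the groups so that group $0$ is the rarer one, with prevalence $s=\pi\le 1/2$.

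\emph{Upper bound.} Trivially $|c_0-c_1|\le 1$. For the other term I would bound each direction separately. By Proposition~\ref{prop:marginal-route} applied to group $0$, $c_0\ge 1-\alpha/s$, so $c_1-c_0\le 1-c_0\le \alpha/s$. In the other direction, the same proposition applied to group $1$ gives $c_1\ge 1-\alpha/(1-s)$. Hence
\[
c_0-c_1\le 1-c_1\le \frac{\alpha}{1-s}\le \frac{\alpha}{s},
\]
where the last step uses $1-s\ge s$. Together these give $|c_0-c_1|\le\min\{1,\alpha/s\}$.

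\emph{Attainment.} Here I follow the sharpness construction in Proposition~\ref{prop:marginal-route}, taking the rarer group as the one being sacrificed.
\begin{itemize}
\item If $s>\alpha$, set $c_1=1$ and $c_0=1-\alpha/s\in(0,1)$. The marginal coverage is $s(1-\alpha/s)+(1-s)=1-\alpha$ exactly, and the gap is $\alpha/s<1$.
\item If $s\le\alpha$, set $c_0=0$ and $c_1=1$. The marginal coverage is $1-s\ge 1-\alpha$, and the gap is $1=\min\{1,\alpha/s\}$.
\end{itemize}
In both cases the gap equals the bound, so the supremum is attained (it is in fact a maximum).

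To realise these coverages with an actual set $\widehat\Gamma$, I reuse the device from the proposition. Let $G\sim\mathrm{Bernoulli}(1-s)$, let $Y$ be binary, and let $U$ be an independent uniform variable. Define $\widehat\Gamma(X)=\{Y\}$ when $U\le c_{G}$ and the wrong singleton otherwise. Because $U$ is independent of $G$, the route-conditional coverage is exactly $c_G$. One must also check that $G$ and $\widehat\Gamma$ are functions of the observed input; this holds if $U$ and $G$ are encoded in $X$.

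The only real subtlety is the interpretive point that the supremum ranges over all joint laws and all sets consistent with marginal validity and the given prevalences. The bound itself is a one-line consequence of the proposition.
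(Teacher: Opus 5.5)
Your proof is correct and follows the paper's argument. For the upper bound, the paper applies Proposition~\ref{prop:marginal-route} to the lower-covered group in one line, $1-c_g\le\min\{1,\alpha/\pi_g\}\le\min\{1,\alpha/s\}$, whereas you treat the two directions separately and use $1-s\ge s$; your attainment step is the same construction, giving coverage one to the complement of the prevalence-$s$ group.

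One small repair is needed in the realization. Returning $\{Y\}$ or the wrong singleton is a function of $X$ only if $Y$ is also encoded in $X$, so encoding $U$ and $G$ is not enough as you state it. Either include $Y$ in $X$, or return the full label set when $U\le c_G$ and $\emptyset$ otherwise, which depends only on $(G,U)$.
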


\begin{proof}
If group $g$ has lower coverage, Proposition~\ref{prop:marginal-route} gives
$1-c_g\leq\min\{1,\alpha/\pi_g\}\leq\min\{1,\alpha/s\}$. Equality follows by
assigning coverage one to the complement of a group with prevalence $s$, and
coverage $1-\alpha/s$ when $s>\alpha$ or zero when $s\leq\alpha$ to that group.
\end{proof}

\textbf{The consequence is stark. A route containing at most an $\alpha$
fraction of inputs can have zero coverage while the complete system remains
marginally valid.} Even a $25\%$ route at $\alpha=0.10$ has only a $60\%$
distribution-free floor. Eq.~\eqref{eq:sharp-gap} gives the corresponding
binary-route disparity.

\section{Method}
\label{sec:method}

\subsection{Intuition}

The bound reveals what went wrong: the guarantee averaged over the routes.
The repair is to make those routes part of the calibration target. This needs
a clean separation of roles. Development data design the routing and
aggregation policy. Final calibration data provide coverage. Once the policy
is frozen, route membership is observable without the test label. We can then
calibrate a threshold for the score distribution on each operational path.
In short: develop first, freeze once, and calibrate by route.

Routing and calibration solve different problems. Routing decides which
evidence is acquired and creates the compute savings. Mondrian calibration is
not unique to routes; it protects whichever fixed, covariate-only groups define
the reliability target. We use routes because early-exit and escalated inputs
are the operational populations created by the deployed predictor.

\subsection{Development-learned sequential routing}

We greedily order sources on $\caldev$. For a source set $\mathcal S$, define
its correct-label negative log likelihood
\begin{equation}
 \mathcal L(\mathcal S)=-\frac{1}{|\caldev|}
 \sum_{(x_i,y_i)\in\caldev}
 \log\!\left[\frac{1}{|\mathcal S|}
 \sum_{j\in\mathcal S}p_j(y_i\mid x_i)\right].
\end{equation}
Given selected sources $\mathcal S$, the next source is
$k^*=\arg\min_{k\notin\mathcal S}\mathcal L(\mathcal S\cup\{k\})$.
Labels are used only on $\caldev$.

At depth $m$, let $\bar p_m$ be the mean probability across acquired sources
and define the label-free confidence margin
\begin{equation}
 M_m(x)=\bar p_{m,(1)}(x)-\bar p_{m,(2)}(x),
\end{equation}
where $(1)$ and $(2)$ are the largest two entries. Among development examples
that reach depth $m$, the threshold $\tau_m$ is the $q$-quantile of $M_m$.
The next source is queried when $M_m(x)<\tau_m$.

The query fraction $q$ is selected without final calibration or test data.
Three internal development partitions evaluate
$q\in\{.10,.20,.25,.30,.40,.50\}$. A candidate must reach at least $0.88$
overall and worst-route coverage at the primary $0.90$ target. It must also
leave at least 50 calibration examples in each route. Among eligible
candidates, the rule first minimizes source calls and then set size. If none
is eligible, it first maximizes worst-route coverage. The selected $q$, source
order, and thresholds are refit on all of $\caldev$ and then frozen.

The $0.88$ guard is an engineering rule for choosing a compute--set-size
operating point. It is not the conformal guarantee, and it does not reuse the
final calibration labels. Across 50 dataset--outer-split fits, the selector
chose $q=.10$ 28 times, $.20$ 19 times, $.25$ twice, and $.30$ once. It never
chose $.40$ or $.50$. Thus, $q$ is not a benchmark-wide constant. Theorem~
\ref{thm:arbitrary-route} permits any of these development-side choices because
the final calibration and test data remain untouched until the complete policy
is frozen.

\subsection{Route-conditional calibration}

For the main binary partition,
\begin{equation}
 G(x)=\mathbf{1}\!\left\{R(x)>1\right\}.
 \label{eq:binary-route}
\end{equation}
Here, $\mathbf{1}\{\cdot\}$ is the indicator function: it equals one when the
condition is true and zero otherwise. Hence, Eq.~\eqref{eq:binary-route}
assigns first-source exits to group zero and every escalated input to group one.
Let $n_g$ be the number of final calibration examples assigned to route $g$.
For miscoverage level $\alpha$, set
\begin{equation}
 k_g=\left\lceil(n_g+1)(1-\alpha)\right\rceil,
 \qquad
 \widehat q_g=A^{(g)}_{(k_g)},
\end{equation}
where $A^{(g)}_{(k_g)}$ is the $k_g^{\mathrm{th}}$ within-route calibration
order statistic and equals $+\infty$ if $k_g=n_g+1$. The returned set is
\begin{equation}
 \widehat\Gamma_\alpha(x)=
 \{y\in\labels:A(x,y)\leq\widehat q_{G(x)}\}.
 \label{eq:prediction-set}
\end{equation}

Algorithm~\ref{alg:sagecp} summarizes the complete development, calibration,
and prediction pipeline.

\begin{algorithm}[H]
\centering
\caption{\sagecp learns and freezes the routing policy before the final
route-conditional calibration step.}
\label{alg:sagecp}
\fbox{\begin{minipage}{0.94\columnwidth}
\small
\textbf{SAGE-CP}\par\smallskip
\textbf{Input:} frozen sources; independent development and calibration data;
target $\alpha$.\par
\textbf{Development phase}
\begin{enumerate}[leftmargin=1.4em,itemsep=0pt,topsep=2pt]
    \item Select $q$ on internal development partitions.
    \item Greedily order sources and fit margin thresholds.
    \item Freeze the complete route, aggregation score, and group map.
\end{enumerate}
\textbf{Calibration phase}
\begin{enumerate}[leftmargin=1.4em,itemsep=0pt,topsep=2pt]
    \item Route every calibration image without using its label.
    \item Compute the true-label routed score in each route group.
    \item Store the finite-sample conformal quantile $\widehat q_g$ per group.
\end{enumerate}
\textbf{Prediction:} route $x$, form its routed score, and return
$\{y:A(x,y)\leq\widehat q_{G(x)}\}$.
\end{minipage}}
\end{algorithm}

Algorithm~\ref{alg:sagecp} has one central invariant: once final calibration
begins, no part of the router, score, or route map changes.

\subsection{Validity after learned routing}

\begin{proposition}[Exact Mondrian equivalence]
\label{prop:mondrian-equivalence}
Fix a finite route map $H$, returned score $S$, calibration sample, and
$\alpha$. A split Mondrian conformal predictor whose taxonomy is $H$ and whose
nonconformity score is $S$ returns exactly the thresholds and prediction sets
in Eq.~\eqref{eq:prediction-set}.
\end{proposition}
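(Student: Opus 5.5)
The proof is a definitional unfolding: I would write out the standard split Mondrian conformal predictor for taxonomy $H$ and score $S$, and check that its thresholds and sets match Eq.~\eqref{eq:prediction-set} term by term. Recall the Mondrian construction. Given a calibration sample $(x_1,y_1),\dots,(x_n,y_n)$, a taxonomy $H$ assigns each example a category $H(x_i)$ in a finite set. For a test point $x$ and candidate label $y$, the Mondrian conformal p-value uses only the calibration examples in the same category $H(x)$:
\[
p^{y}(x)=\frac{1+|\{i:H(x_i)=H(x),\,S(x_i,y_i)\geq S(x,y)\}|}{n_{H(x)}+1}.
\]
The predictor returns $\{y: p^y(x)>\alpha\}$. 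I would take this form as the definition, citing \citet{vovk2005algorithmic,vovk2012conditional}. Because $H$ depends only on $x$, the category of the test point does not depend on the candidate label $y$. This is exactly why the route map, which is label-free by construction, qualifies as a taxonomy.

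\textbf{Step 1: quantile form.} For a fixed group $g$ with $n_g$ calibration scores $A^{(g)}_{(1)}\le\dots\le A^{(g)}_{(n_g)}$, I would show that
\[
p^y(x)>\alpha \iff S(x,y)\le A^{(g)}_{(k_g)}, \qquad k_g=\lceil (n_g+1)(1-\alpha)\rceil,
\]
with the convention $A^{(g)}_{(n_g+1)}=+\infty$. The condition $p^y>\alpha$ says that the number of within-group scores $\ge S(x,y)$ is at least $\lceil\alpha(n_g+1)\rceil-1+\ldots$. Equivalently, the number of within-group scores strictly below $S(x,y)$ is at most $n_g-\lfloor\alpha(n_g+1)\rfloor$ (up to the appropriate strict/non-strict integer form). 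I would rewrite this as $S(x,y)\le A^{(g)}_{(k)}$ for the corresponding order-statistic index $k$. The index identity $n_g+1-\lceil\alpha(n_g+1)\rceil+\ldots$ reduces to $\lceil(1-\alpha)(n_g+1)\rceil$ via $\lceil m-t\rceil=m-\lfloor t\rfloor$ for integer $m$.

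\textbf{Step 2: assemble.} Setting $g=H(x)=G(x)$ and $S=A$ gives exactly $\widehat q_g$ and the set $\{y:A(x,y)\le\widehat q_{G(x)}\}$. I would also check the edge cases: an empty group ($n_g=0$ gives $k_g=1=n_g+1$, so the threshold is $+\infty$ on both sides), and the case $k_g=n_g+1$.

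\textbf{Main obstacle.} The only delicate point is the tie and integer bookkeeping in Step 1. Ties between the test score and calibration scores, and the strict versus non-strict inequality in the p-value definition, must line up with ``$\le$'' in Eq.~\eqref{eq:prediction-set}. I would handle this by working with counts rather than order statistics. Let $N_<(s)=|\{i\in g: A_i<s\}|$. Then $p^y>\alpha$ is equivalent to $N_<(s)<k_g$, which holds if and only if fewer than $k_g$ scores lie strictly below $s$. That in turn holds if and only if $s\le A^{(g)}_{(k_g)}$, and this last equivalence is valid even when ties are present. Finally, I would remark that randomized (smoothed) Mondrian p-values are excluded, since the statement concerns the deterministic split version.
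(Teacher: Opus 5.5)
Your proof is correct, but it follows a different route from the paper's. The paper's proof is a one-line identification. It takes split Mondrian CP to mean sorting the calibration scores within each category of $H$ and thresholding at the $\lceil(n_h+1)(1-\alpha)\rceil$-th order statistic, so coincidence with Eq.~\eqref{eq:prediction-set} is immediate.

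You instead start from Vovk's p-value definition, which returns $\{y: p^y(x)>\alpha\}$ using within-category counts. From it you derive the key lemma that $p^y(x)>\alpha$ if and only if $S(x,y)\le A^{(g)}_{(k_g)}$.

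Your count-based argument in the last paragraph is the rigorous version and should replace the hedged index manipulations in Step 1:
\[
p^y(x)>\alpha \iff 1+n_g-N_<(s)>\alpha(n_g+1) \iff N_<(s)<(1-\alpha)(n_g+1) \iff N_<(s)<k_g ,
\]
where the last step uses integrality of $N_<(s)$. Then $N_<(s)<k_g \iff s\le A^{(g)}_{(k_g)}$ holds even with ties. The $+\infty$ convention covers $k_g=n_g+1$, including an empty group $n_g=0$ provided $\alpha<1$.

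What your route buys is that the equivalence holds under the canonical p-value formulation of Mondrian CP. It also checks ties and the strict versus non-strict boundary explicitly, which the paper's proof leaves implicit. The paper's route buys brevity, at the cost of making the proposition essentially definitional.
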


\begin{proof}
Both procedures sort the same calibration scores separately inside each value
of $H$ and use the same finite-sample order statistic
$\lceil(n_h+1)(1-\alpha)\rceil$. Their thresholds, and hence their sets for
every test input, coincide pointwise.
\end{proof}

Proposition~\ref{prop:mondrian-equivalence} locates the contribution precisely.
With a fixed route map, \sagecp calibration is split Mondrian CP. The
routing-specific work comes before the quantile: expose the operational
failure, learn a useful sequential policy, and freeze the paths that define
the taxonomy.

\begin{theorem}[Validity after development-learned routing]
\label{thm:arbitrary-route}
Let $\caldev$ be independent of calibration and test examples. Conditional on
$\caldev$, let $H_{\caldev}:\mathcal X\to\{1,\ldots,J\}$ be any finite
covariate-only grouping induced by an arbitrary learned routing policy, and let
$S_{\caldev}:\mathcal X\times\labels\to\mathbb R$ be its returned score.
Suppose the $n$ calibration examples and a new example are exchangeable
conditional on $\caldev$. Using the within-group finite-sample quantile in
Eq.~\eqref{eq:prediction-set}, every group with positive test probability
satisfies, for almost every $\caldev$,
\begin{equation}
\Pr\!\left\{Y_{\rm new}\in\widehat\Gamma_\alpha(X_{\rm new})
\mid\caldev,H_{\caldev}(X_{\rm new})=h\right\}\geq1-\alpha.
\end{equation}
\end{theorem}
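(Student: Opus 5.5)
Conditioning on $\caldev$ reduces the statement to standard Mondrian split conformal validity with a fixed taxonomy. Once $\caldev$ is fixed, the map $H=H_{\caldev}$ and the score $S=S_{\caldev}$ are deterministic measurable functions. Independence of $\caldev$ from the calibration and test examples, together with the assumed conditional exchangeability, means that under the conditional law given $\caldev$ the $n+1$ pairs $Z_i=(X_i,Y_i)$ are exchangeable. It also means that the labels of those pairs were never used to build $H$ or $S$. By Proposition~\ref{prop:mondrian-equivalence}, the sets in Eq.~\eqref{eq:prediction-set} coincide with those of split Mondrian CP with taxonomy $H$ and score $S$. It therefore suffices to prove group-conditional validity for a fixed taxonomy. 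The phrase ``almost every $\caldev$'' absorbs the usual regular-conditional-probability caveats.

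The key step is a second conditioning: on the group-membership pattern and on the multiset of within-group scores. Let $h$ be a group with positive test probability, and write $G_i=H(X_i)$ for $i=1,\ldots,n+1$, where index $n+1$ is the new point. Condition on the event $G_{n+1}=h$ and on the number $N$ of calibration points with $G_i=h$. Consider the $N+1$ points in group $h$, namely those calibration points together with the test point.

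I will argue that, given $\caldev$, the index set $\{i:G_i=h\}$, and the unordered multiset of their pairs $Z_i$, every assignment of these pairs to the $N+1$ positions is equally likely. This follows from exchangeability, because $G_i$ is a fixed function of $X_i$ and hence a symmetric statistic under permutations that preserve group labels. Concretely, the joint law is invariant under permutations of indices within $\{i:G_i=h\}$, and conditioning on $G_{n+1}=h$ is compatible with this invariance. Consequently, the scores $V_i=S(X_i,Y_i)$ for the $N+1$ in-group points are exchangeable. In particular, the rank of $V_{n+1}$ among them is (super-)uniform, with ties handled in the conservative direction.

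The usual quantile argument then applies with sample size $N$. The event $Y_{n+1}\in\widehat\Gamma_\alpha(X_{n+1})$ is exactly $V_{n+1}\le\widehat q_h$, where $\widehat q_h$ is the $\lceil (N+1)(1-\alpha)\rceil$-th order statistic of the $N$ calibration scores, or $+\infty$. By the rank argument this event has conditional probability at least $\lceil (N+1)(1-\alpha)\rceil/(N+1)\ge1-\alpha$. This holds for every value of $N$, including $N=0$, where $\widehat q_h=+\infty$. Averaging over $N$ and over the multiset, given $\caldev$ and $G_{n+1}=h$, yields the theorem.

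The main obstacle is making the within-group exchangeability rigorous when group sizes are random: one must condition on the membership pattern without breaking symmetry. My first approach is to pass to the full $(n+1)$-point problem. I would condition on the membership vector $(G_1,\ldots,G_{n+1})$ and on the multiset $\{Z_i:G_i=h\}$, and observe that the conditional law is invariant under permutations of the indices in that set. If this becomes messy, the fallback is to invoke the known Mondrian result of \citet{vovk2012conditional}, applied conditionally on $\caldev$.
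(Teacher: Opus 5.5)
Your proposal is correct and follows essentially the same route as the paper's proof: condition on $\caldev$ to freeze the policy, then condition on the calibration group indicators together with $H_{\caldev}(X_{\rm new})=h$, use that the in-group true-label scores stay exchangeable, apply the order-statistic bound $\lceil (n_h+1)(1-\alpha)\rceil/(n_h+1)\ge 1-\alpha$ (the $+\infty$ threshold gives coverage one), and average over the indicators. Your extra conditioning on the in-group multiset and your appeal to the Mondrian-equivalence proposition are harmless refinements, not a different argument.
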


\begin{proof}
Conditioning on $\caldev$ fixes the source operations, stopping policy, score,
and group map. For a fixed group $h$, condition further on all calibration
group indicators and on the new example belonging to $h$. Because the group
map is fixed and covariate-only, the $n_h$ selected calibration true-label
scores remain exchangeable with the new true-label score. The new score has
probability at least $k_h/(n_h+1)\geq1-\alpha$ of not exceeding the
$k_h^{\mathrm{th}}$ calibration order statistic; ties can only increase this
probability. When $k_h=n_h+1$, the threshold is $+\infty$ and coverage is one.
Averaging over the calibration group indicators completes the proof.
\end{proof}

\begin{corollary}[Free development-side policy selection]
The source family, order, stage count, stopping thresholds, aggregation rule,
score family, and finite route partition may be selected with or without labels
on $\caldev$. No multiplicity correction is required for validity if the
selected policy is frozen before final calibration and group membership does
not use the unknown test label.
\end{corollary}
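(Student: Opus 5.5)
The corollary follows from Theorem~\ref{thm:arbitrary-route} once each development-side choice is viewed as one measurable function of $\caldev$. I would formalize the selection procedure, whatever it is, as a map $\Psi$ from $\caldev$ to a policy $(H_{\caldev},S_{\caldev})$. The map may use labels on $\caldev$, internal repartitions of $\caldev$, the query-fraction grid, or searches over source families, orders, stage counts, thresholds, aggregation rules, score families, and finite partitions. Possible auxiliary randomness, such as random internal splits, would be handled by treating it as a variable independent of calibration and test data and absorbing it into $\caldev$.

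The key step is to check that the resulting pair satisfies the hypotheses of Theorem~\ref{thm:arbitrary-route}, conditional on $\caldev$.
\begin{itemize}
\item The route map $H_{\caldev}$ takes finitely many values because the selected partition is finite.
\item $H_{\caldev}$ is covariate-only because routing and grouping never consult the unknown test label.
\item The calibration and test examples remain exchangeable given $\caldev$, because $\caldev$ is independent of them. Their joint law is therefore unchanged by conditioning.
\item The policy is frozen before calibration, so nothing about $H_{\caldev}$ or $S_{\caldev}$ depends on calibration scores.
\end{itemize}
Given these facts, Theorem~\ref{thm:arbitrary-route} applies to the realized policy and gives conditional route coverage at least $1-\alpha$ for almost every $\caldev$.

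For the no-multiplicity claim, I would stress that the guarantee is conditional on the selected policy itself. The selection is a function of data independent of the conformal step, so there is no selection event to correct for. The many candidate policies that were examined but not chosen play no role, unlike a union bound over candidates evaluated on the calibration data. If desired, integrating over $\caldev$ then gives the unconditional statement by the tower property.

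The main obstacle is measurability and the treatment of the conditioning event. Specifically, the groups with positive test probability depend on $\caldev$, and the ``almost every $\caldev$'' qualifier must be carried through. I would handle this by stating the corollary conditionally and noting that each candidate policy is built from measurable operations such as quantiles, argmins over finite sets, and margins. Finally, I would remark that the condition on test labels is necessary: a group map depending on $Y_{\rm new}$ breaks the exchangeability argument in the proof of Theorem~\ref{thm:arbitrary-route}.
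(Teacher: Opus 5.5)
Your proposal is correct and follows the paper's argument. The paper's one-line proof also conditions on $\caldev$ together with any development-side random seed, so that the selected policy is fixed, and then applies Theorem~\ref{thm:arbitrary-route} directly. Your hypothesis checks (finite, covariate-only, exchangeable given $\caldev$, frozen before calibration) and your remarks on measurability and the absence of any selection event make explicit what that one line leaves implicit.
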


\begin{proof}
Conditioning on $\caldev$ and any development-side random seed fixes the
selected policy, so Theorem~\ref{thm:arbitrary-route} applies directly.
\end{proof}

The theorem covers binary early/escalated groups, full exit-depth calibration,
and any coarser finite partition selected on development data. Greater
resolution has a finite-sample cost. A group threshold is nontrivial only when
$\lceil(n_h+1)(1-\alpha)\rceil\leq n_h$, equivalently
$n_h\geq1/\alpha-1$. At $\alpha=.10$, a group with fewer than nine calibration
examples must receive the $+\infty$ threshold and hence the full label set.
This is the exact point where an overly fine taxonomy becomes vacuous. Larger
groups can still yield variable or inefficient thresholds when their sample is
small. The theorem does not provide pointwise, class-conditional, or
distribution-shift coverage.

\section{Experimental Design}
\label{sec:experiments}

\subsection{Datasets, sources, and splits}

We evaluate ten image-classification datasets spanning objects, textures, remote sensing, fine-grained recognition, food, and scenes. In each dataset, every test image belongs to a fixed set of candidate classes known at inference time. The collection
includes Caltech-101 \citep{feifei2004learning}, the Describable Textures
Dataset (DTD) \citep{cimpoi2014describing}, EuroSAT
\citep{helber2019eurosat}, the Fine-Grained Visual Classification Aircraft
benchmark (FGVC-Aircraft) \citep{maji2013fine}, Flowers-102
\citep{nilsback2008automated}, Food-101 \citep{bossard2014food}, Oxford-IIIT
Pets \citep{parkhi2012cats}, Stanford Cars \citep{krause2013cars}, and SUN397
\citep{xiao2010sun}. The 1,000-class ImageNetV2 matched-frequency set provides
the largest label space in the evaluation \citep{recht2019imagenetv2}.
Table~\ref{tab:data} reports the exact split sizes.

\begin{table}[H]
\centering
\scriptsize
\caption{\textbf{Benchmark split sizes.} Datasets with at least 10,000
available images use a fixed, approximately class-stratified 10,000-image pool
partitioned into 4,000 development, 3,000 calibration, and 3,000 test examples.
Smaller datasets use their complete official pool with an approximately
40/30/30 split.}
\label{tab:data}
\begin{tabular}{lrrrr}
\toprule
Dataset & Classes & Development & Calibration & Test \\
\midrule
Caltech-101 & 101 & 3,470 & 2,603 & 2,604 \\
DTD & 47 & 2,256 & 1,692 & 1,692 \\
EuroSAT & 10 & 4,000 & 3,000 & 3,000 \\
FGVC-Aircraft & 100 & 4,000 & 3,000 & 3,000 \\
Flowers-102 & 102 & 3,275 & 2,456 & 2,458 \\
Food-101 & 101 & 4,000 & 3,000 & 3,000 \\
Oxford Pets & 37 & 2,939 & 2,204 & 2,206 \\
Stanford Cars & 196 & 4,000 & 3,000 & 3,000 \\
SUN397 & 397 & 4,000 & 3,000 & 3,000 \\
ImageNetV2 & 1,000 & 4,000 & 3,000 & 3,000 \\
\bottomrule
\end{tabular}
\end{table}

The heterogeneous source pool contains the RN50-QuickGELU/OpenAI and
ViT-B/32/LAION-2B checkpoints
\citep{radford2021learning,cherti2023openclip}, ViT-L/14/DataComp-XL
\citep{gadre2023datacomp}, and SigLIP2-SO400M/256
\citep{tschannen2025siglip2}. Every source uses the same four-prompt text
ensemble and remains frozen. The development selector places SigLIP2 first on
all splits of nine datasets and ViT-L/14 first on Caltech-101.

The first evaluation split uses a fixed, approximately class-stratified
partition, assigning 40\% of the examples to development, 30\% to calibration,
and 30\% to testing. For datasets with 10,000-image evaluation pools, this
produces 4,000 development, 3,000 calibration, and 3,000 test examples. We
generate four additional approximately class-stratified partitions using
prespecified random seeds and evaluate every method on the same five
partitions. Because these partitions overlap, Student's $t$ intervals computed
from paired differences across partitions are interpreted as descriptive
stability intervals rather than confidence intervals based on five independent
test samples. For the first partition, we additionally compute 2,000 paired
bootstrap replicates over the test examples.

\subsection{Comparisons and metrics}

The primary controlled comparison isolates the calibration boundary while
holding the router, routed probability vectors, calibration budget, and source
calls fixed. Global calibration fits one threshold across all routed examples,
whereas \sagecp fits a separate threshold within each route. Any difference in
route reliability therefore comes from where calibration is performed, not
from a different predictor or compute allocation.

The single-source comparisons use LAC, APS, and RAPS
\citep{romano2020classification,angelopoulos2021uncertainty}. We also evaluate
first-source route LAC, each individual backbone, and the full four-source
probability ensemble. Additional comparisons include Conf-OT with all three
scores \citep{silva2025zeroshot}, localized VLM conformal prediction
\citep{fuchs2026localized}, empirical-Bayes $r$-value CP
\citep{zeng2026empirical}, and a Bonferroni-valid multiclass adaptation of
Conformal Cascade \citep{dou2026cascade}. All primary methods use the same outer
calibration split, and no method receives test labels or a larger calibration
budget in the equal-calibration comparison.

Aggregate efficiency comparisons require matched coverage. We define the
near-target regime as macro coverage within $0.001$ of $0.90$. Methods outside
this band remain informative for coverage and route diagnostics, but their set
sizes are reported separately rather than ranked against near-target methods.

We report marginal coverage, mean set size, average source calls, and coverage
within the two \sagecp route groups. Minimum route coverage ($\mincov$) is the
smaller of the two empirical group coverages, and the route gap is their
absolute difference. For methods not calibrated on these route groups, both
quantities are diagnostics rather than guarantees. Because the minimum of two
noisy estimates is downward biased, Theorem~\ref{thm:arbitrary-route} protects
each fixed group rather than the expectation of their empirical minimum.

All main results use $\alpha=0.10$. We also test
$\alpha\in\{0.05,0.10,0.20\}$, calibration sizes 250, 500, 1,000, and the full
dataset-specific split. For routing sensitivity, we freeze each
$q\in\{.10,.20,.25,.30,.40,.50\}$ in turn and evaluate the nested route
taxonomies $\{1,2+\}$, $\{1,2,3+\}$, and $\{1,2,3,4\}$ on all ten datasets and
five splits. Here, $m+$ denotes every stopping depth greater than or equal to
$m$: for example, $\{1,2+\}$ separates first-source exits from all escalated
inputs, while $\{1,2,3+\}$ retains depth two and pools depths three and four.
These analyses use the same final splits, but neither final labels nor test
metrics select $q$, the taxonomy, a model, a route, a score family, or a
hyperparameter.

\section{Results}
\label{sec:results}

\subsection{RQ1: Does marginal validity conceal route failure?}

Figure~\ref{fig:main} contrasts the reassuring marginal average with the route
behavior it conceals. Although marginal coverage remains near the target on
every dataset, a global threshold can concentrate most errors in one route.
Escalated images are undercovered on several datasets, whereas first-source
exits are weaker on FGVC-Aircraft. EuroSAT and FGVC-Aircraft provide contrasting
cases because their global route gaps are already small.

\begin{figure}[H]
\centering
\begin{subfigure}[t]{0.60\textwidth}
    \centering
    \includegraphics[width=\linewidth]{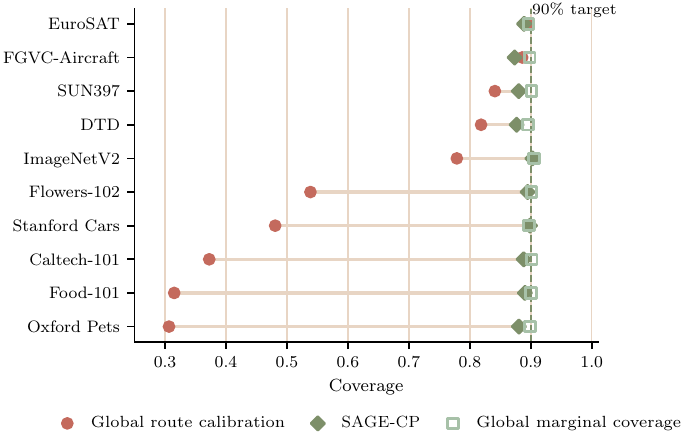}
    \subcaption{\textbf{Marginal validity can hide route failure.}}
    \label{fig:main-datasets}
\end{subfigure}\hfill
\begin{subfigure}[t]{0.35\textwidth}
    \centering
    \includegraphics[width=\linewidth]{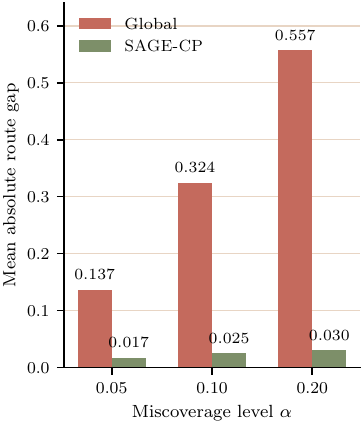}
    \subcaption{\textbf{The repair persists across risk levels.}}
    \label{fig:main-risk}
\end{subfigure}
\caption{\textbf{Global validity does not imply route reliability.}
(\subref{fig:main-datasets}) Both methods operate near $0.90$ marginal coverage,
yet global post-routing calibration has worst-route coverage as low as $0.306$.
\sagecp uses identical routed scores and compute but calibrates the two routes
separately. (\subref{fig:main-risk}) The mean route gap of global calibration
grows with $\alpha$; the \sagecp gap stays at or below $0.030$. Values average
five aligned splits.}
\label{fig:main}
\end{figure}

\FloatBarrier

Table~\ref{tab:primary} reports the corresponding dataset-level comparison
under identical routed predictions and source calls.

\begin{table}[H]
\centering
\small
\caption{\textbf{Equal-score, equal-compute comparison at $\alpha=0.10$.}
Global and \sagecp use the same routed probability vectors and source calls.
$\mathrm{Cov.}$ denotes marginal coverage and $\mathrm{MinCov}$ denotes the
smaller route coverage. $\Delta$ is \sagecp minus global $\mathrm{MinCov}$.
$\dagger$ marks a
descriptive paired five-split 95\% stability interval excluding zero; the
repartitions overlap and this is not an independent-replication significance
test.}
\label{tab:primary}
\resizebox{\textwidth}{!}{%
\begin{tabular}{lrrrrrrrrrr}
\toprule
& \multicolumn{4}{c}{Global route calibration} &
\multicolumn{5}{c}{\sagecp} & \\
\cmidrule(lr){2-5}\cmidrule(lr){6-10}
Dataset & Cov. & MinCov & Gap & Size & Cov. & MinCov & Gap & Size & Calls & $\Delta$ MinCov \\
\midrule
Caltech-101 & .902 & .373 & .611 & .937 & .901 & .888 & .031 & .999 & 1.198 & +.516$^\dagger$ \\
DTD & .895 & .818 & .094 & 2.575 & .898 & .877 & .042 & 2.733 & 1.227 & +.058$^\dagger$ \\
EuroSAT & .897 & .895 & .011 & 4.424 & .896 & .889 & .021 & 4.424 & 1.186 & -.006 \\
FGVC-Aircraft & .898 & .886 & .018 & 3.093 & .898 & .874 & .035 & 3.098 & 1.113 & -.012 \\
Flowers-102 & .901 & .539 & .439 & 1.044 & .898 & .895 & .009 & 2.623 & 1.218 & +.356$^\dagger$ \\
Food-101 & .900 & .315 & .671 & .916 & .904 & .890 & .020 & 1.089 & 1.169 & +.575$^\dagger$ \\
Oxford Pets & .899 & .306 & .683 & .910 & .896 & .881 & .030 & .979 & 1.183 & +.574$^\dagger$ \\
Stanford Cars & .897 & .481 & .509 & .916 & .902 & .899 & .010 & .983 & 1.243 & +.418$^\dagger$ \\
SUN397 & .901 & .841 & .067 & 1.724 & .901 & .880 & .028 & 1.771 & 1.128 & +.039$^\dagger$ \\
ImageNetV2 & .905 & .779 & .141 & 2.393 & .905 & .903 & .022 & 3.418 & 1.109 & +.124$^\dagger$ \\
\midrule
Macro mean & .900 & .623 & .324 & 1.893 & .900 & .887 & .025 & 2.212 & 1.177 & +.264 \\
\bottomrule
\end{tabular}}
\end{table}

Changing only the calibration boundary has a large effect:
Table~\ref{tab:primary} shows that \sagecp raises macro $\mincov$ by 26.4
percentage points and reduces the route gap by 92.3\%, while marginal coverage
and source calls remain unchanged. The paired split interval for the $\mincov$
difference excludes zero on eight datasets and includes zero on EuroSAT and
FGVC-Aircraft. Since the partitions overlap, these intervals describe the
stability of the direction rather than independent-sample significance.

The stronger route target also changes set efficiency. Paired stability
intervals indicate larger \sagecp sets on six datasets and include zero on the
remaining four. The macro mean rises from 1.893 to 2.212 labels. Flowers-102
shows the largest change, from 1.044 to 2.623 labels, while its route gap falls
from 0.439 to 0.009. Thus, the primary gain is route reliability rather than a
uniform reduction in set size relative to global calibration.

\paragraph{Why calibrate the routes?}
We isolate operational alignment from generic stratification with a matched
control. Using development data only, we choose a threshold on the first
source's maximum probability that matches the prevalence of the escalated
route. We then freeze this confidence taxonomy and calibrate the same routed
LAC score within its two bins. The router, final calibration examples, source
calls, and routed probability vectors remain unchanged, so the comparison
changes only which groups receive separate conformal quantiles.
Table~\ref{tab:taxonomy-alignment} compares the two taxonomies.

\begin{table}[H]
\centering
\small
\caption{\textbf{Operational routes versus a matched confidence taxonomy at
$\alpha=0.10$.} Ten-dataset means over five aligned splits. Both Mondrian
methods have 0.900 coverage and 1.177 source calls. Neither calibration
taxonomy dominates: each is strongest on its own groups, while route-Mondrian
returns smaller sets.}
\label{tab:taxonomy-alignment}
\setlength{\tabcolsep}{2.2pt}
\begin{tabular}{lrrrrr}
\toprule
& \multicolumn{2}{c}{Actual routes} &
\multicolumn{2}{c}{Confidence bins} & \\
\cmidrule(lr){2-3}\cmidrule(lr){4-5}
Calibration & MinCov & Gap & MinCov & Gap & Set size \\
\midrule
Global & .623 & .324 & .592 & .359 & 1.893 \\
Route-Mondrian & .887 & .025 & .845 & .064 & 2.212 \\
Confidence-Mondrian & .883 & .028 & .891 & .018 & 2.442 \\
\bottomrule
\end{tabular}
\end{table}

The two partitions agree on 95.5\% of test images, so similar route behavior is
expected. Route calibration has slightly higher actual-route minimum coverage,
a slightly smaller route gap, and sets that are 0.230 labels smaller on
average. The route-minus-confidence paired
differences are $+.005\pm.006$ for actual-route MinCov,
$-.003\pm.007$ for the route gap, and $-.230\pm.126$ labels for set size;
these are descriptive 95\% stability intervals because the outer splits
overlap. The two route statistics are therefore modest relative to split
variability. Conversely, confidence calibration is better within the bins it
defines. Together, these results confirm that calibration protects its declared
groups. Routes are the primary target here because they are the populations
created by, and exposed to, the system's acquisition decisions; the confidence
taxonomy is a calibration control rather than a replacement for the sequential
policy that produces the routed score and compute savings.

\takeaway{A nominal global average can coexist with near-worst-case route
behavior. Route calibration repairs this failure without changing the router
or its compute. A prevalence-matched confidence taxonomy gives a strong but
different guarantee. Routing supplies adaptive compute; calibration determines
which operational population is protected.}

\subsection{RQ2: Does route reliability require full inference?}

Route reliability does not require four source evaluations. The full ensemble
reaches $0.889$ $\mincov$, compared with $0.887$ for \sagecp, and both have a
$0.025$ route gap. However, \sagecp uses 1.177 sources instead of four and
returns smaller sets on the macro mean, 2.212 versus 2.486 labels. Across
datasets, its $\mincov$ is tied with the ensemble on nine datasets and lower on
one, while its sets are descriptively smaller on seven.

Table~\ref{tab:baselines} separates near-target comparisons from methods that
overcover. Conf-OT APS and RAPS have similar diagnostic route balance. \sagecp
has descriptively smaller sets on six of ten datasets than Conf-OT APS and on
seven than Conf-OT RAPS. It uses 0.177 more source calls on average. Localized
VLM-CP and Conf-OT LAC often keep the route disparity despite their small sets.
The Bonferroni cascade covers at $0.972$, far above the $0.90$ target. Its set
size is therefore not a matched-coverage efficiency comparison.

The route-calibrated first-source baseline also achieves comparable route
coverage with one call, but its mean sets are larger than those of \sagecp.
This comparison separates the two components of the design: route calibration
supplies reliability, while selective escalation acquires additional evidence
when the first source is insufficient.

\takeaway{Route reliability does not require full inference. \sagecp matches
the full ensemble's group behavior with 70.6\% fewer source calls, while the
first-source baseline exposes the smaller compute--informativeness tradeoff.}

\begin{table}[H]
\centering
\small
\caption{\textbf{Coverage-aligned comparisons and unmatched diagnostics at
$\alpha=0.10$.}
Values are unweighted means over ten datasets and five aligned splits.
Panel A contains methods within 0.001 of the 0.90 target. The primary
matched comparison isolates the effect of route-conditional calibration
under identical scores, routing, and compute. The remaining rows provide
broader coverage-aligned comparisons. Panel B reports methods that
materially overcover; their set sizes and source counts are diagnostic
and should not be ranked against Panel A.}
\label{tab:baselines}

\begin{tabular}{llrrrrr}
\toprule
Regime & Method & Coverage & MinCov & Gap & Set size & Sources \\
\midrule

\multicolumn{7}{l}{%
    \textit{Panel A: Coverage-aligned efficiency comparisons}}\\
\midrule

\multicolumn{7}{l}{\textit{Primary matched comparison}}\\
\hdashline
Equal score/compute
    & Global routed LAC
    & .900 & .623 & .324 & 1.893 & 1.177 \\
Proposed
    & \sagecp{} (route-Mondrian)
    & .900 & .887 & .025 & 2.212 & 1.177 \\

\hdashline
\multicolumn{7}{l}{%
    \textit{Other coverage-aligned baselines}}\\
\hdashline
Single source
    & Vanilla first-source LAC
    & .900 & .610 & .339 & 1.958 & 1.000 \\
Single source
    & Vanilla first-source APS
    & .901 & .884 & .030 & 3.456 & 1.000 \\
Single source
    & Vanilla first-source RAPS
    & .901 & .882 & .032 & 3.008 & 1.000 \\
Single source/route CP
    & First-source route LAC
    & .900 & .893 & .019 & 2.338 & 1.000 \\
Transductive adaptation
    & Conf-OT LAC
    & .899 & .622 & .324 & 1.670 & 1.000 \\
Transductive adaptation
    & Conf-OT APS
    & .901 & .891 & .015 & 2.669 & 1.000 \\
Transductive adaptation
    & Conf-OT RAPS
    & .900 & .888 & .019 & 2.453 & 1.000 \\
Localized calibration
    & Localized VLM-CP LAC
    & .899 & .611 & .338 & 1.888 & 1.000 \\
Full compute
    & Four-source route ensemble
    & .900 & .889 & .025 & 2.486 & 4.000 \\

\midrule
\multicolumn{7}{l}{%
    \textit{Panel B: Coverage-unmatched diagnostics---not ranked by efficiency}}\\
\midrule

Full compute
    & Empirical-Bayes $r$-value CP
    & .921 & .734 & .218 & 3.240 & 4.000 \\
Adaptive cascade
    & Conformal Cascade adaptation
    & .972 & .908 & .076 & 6.749 & 2.604 \\
\bottomrule
\end{tabular}
\end{table}

\subsection{RQ3: Is the conclusion robust?}

We first vary the target risk level while keeping the routing and evaluation
protocol fixed. Table~\ref{tab:alpha} summarizes the resulting macro means.

\begin{table}[H]
\centering
\small
\caption{\textbf{Risk-level sensitivity.} Ten-dataset macro means. The global
route gap worsens as the target becomes more aggressive; \sagecp remains near
balanced.}
\label{tab:alpha}
\begin{tabular}{rrrrrrr}
\toprule
$\alpha$ & \multicolumn{3}{c}{Global} & \multicolumn{3}{c}{\sagecp} \\
\cmidrule(lr){2-4}\cmidrule(lr){5-7}
& Cov. & MinCov & Gap & Cov. & MinCov & Gap \\
\midrule
.05 & .950 & .834 & .137 & .950 & .942 & .017 \\
.10 & .900 & .623 & .324 & .900 & .887 & .025 \\
.20 & .798 & .327 & .557 & .801 & .784 & .030 \\
\bottomrule
\end{tabular}
\end{table}

The route disparity is not specific to $\alpha=.10$. As $\alpha$ rises from
0.05 to 0.20, global $\mincov$ falls from 0.834 to 0.327 and its gap grows from
0.137 to 0.557. Over the same range, the \sagecp gap remains between 0.017 and
0.030.

We next vary the number of calibration examples to examine the finite-sample
effect predicted by the route-specific quantiles. Table~\ref{tab:calsize}
reports the comparison.

\begin{table}[H]
\centering
\small
\caption{\textbf{Calibration-size sensitivity at $\alpha=0.10$.} Small route
samples make \sagecp conservative and variable, but do not recreate the severe
global disparity.}
\label{tab:calsize}
\resizebox{10cm}{!}{%
\begin{tabular}{rrrrrrrr}
\toprule
Calibration & \multicolumn{3}{c}{Global} & \multicolumn{4}{c}{\sagecp} \\
\cmidrule(lr){2-4}\cmidrule(lr){5-8}
$n$ & Cov. & MinCov & Gap & Cov. & MinCov & Gap & Size \\
\midrule
250 & .905 & .641 & .310 & .909 & .892 & .044 & 2.845 \\
500 & .902 & .630 & .320 & .904 & .887 & .033 & 2.311 \\
1,000 & .900 & .624 & .324 & .901 & .889 & .026 & 2.239 \\
Full & .900 & .623 & .324 & .900 & .887 & .025 & 2.212 \\
\bottomrule
\end{tabular}
}
\end{table}

The results follow the expected finite-sample pattern. With 250 calibration
examples, small route strata produce more conservative sets and a 0.044 mean
gap. At 1,000 examples and at the full calibration size, the gap decreases to
0.026 and 0.025, respectively, while global calibration remains imbalanced at
every budget.

The next analysis refines the route taxonomy from the primary early/escalated
partition to individual exit depths. Table~\ref{tab:route-depth} quantifies the
resulting trade-off between route resolution and calibration support.

\begin{table}[H]
\centering
\small
\caption{\textbf{Exit-depth granularity at $\alpha=0.10$.} Unweighted
ten-dataset means over the same five splits and development-selected policies.
``Smallest $n_g$'' is the mean size of the smallest calibration stratum. A vacuous
split contains at least one route assigned the exact $+\infty$ threshold.}
\label{tab:route-depth}
\resizebox{\textwidth}{!}{%
\begin{tabular}{lrrrrrrrrr}
\toprule
& & \multicolumn{2}{c}{Global calibration} &
\multicolumn{4}{c}{Route-conditional calibration} & & \\
\cmidrule(lr){3-4}\cmidrule(lr){5-8}
Taxonomy & Smallest $n_g$ & MinCov & Gap & Coverage & MinCov & Gap & Set size & Vacuous splits & Vacuous test \\
\midrule
$\{1,2+\}$       & 389.8 & .623 & .324 & .900 & .887 & .025 & 2.212 & 0/50  & .000\% \\
$\{1,2,3+\}$     &  65.5 & .435 & .514 & .900 & .872 & .062 & 2.269 & 0/50  & .000\% \\
$\{1,2,3,4\}$    &  12.7 & .373 & .586 & .900 & .849 & .126 & 2.414 & 27/50 & .061\% \\
\bottomrule
\end{tabular}}
\end{table}

As the taxonomy becomes finer, global macro $\mincov$ falls from 0.623 to
0.373, whereas separate calibration repairs much of the disparity. The mean
smallest calibration stratum simultaneously decreases from 389.8 examples to
65.5 and then 12.7. The three-group taxonomy remains nonvacuous on every split.
For the four-depth taxonomy, at least one route has fewer than nine examples in
27 of 50 splits and therefore receives the exact full-set threshold. These
routes account for only 0.061\% of test images on the macro average (87 of
134,800 pooled predictions, or 0.065\% under pooled weighting), although they
increase set size and group variability. The primary binary partition therefore
provides the most useful balance between operational resolution and calibration
support in this evaluation.

The rare-route cases are spread across the benchmark rather than confined to
one dataset. Table~\ref{tab:vacuity-datasets} gives the full breakdown.
FGVC-Aircraft and ImageNetV2 are vacuous in all five splits, while DTD,
Flowers-102, and Stanford Cars are vacuous in one. In every case, the affected
test population is small.

\begin{table}[H]
\centering
\scriptsize
\caption{\textbf{Per-dataset four-depth vacuity at $\alpha=0.10$.}
``Splits'' counts outer splits with at least one exact $+\infty$ threshold.
``Test'' is the mean fraction of test images assigned to vacuous routes.}
\label{tab:vacuity-datasets}
\begin{tabular}{lrr}
\toprule
Dataset & Splits & Test (\%) \\
\midrule
Caltech-101    & 3/5 & .1075 \\
DTD            & 1/5 & .0118 \\
EuroSAT        & 2/5 & .0800 \\
FGVC-Aircraft  & 5/5 & .1067 \\
Flowers-102    & 1/5 & .0081 \\
Food-101       & 3/5 & .0733 \\
Oxford Pets    & 2/5 & .0363 \\
Stanford Cars  & 1/5 & .0267 \\
SUN397         & 4/5 & .1000 \\
ImageNetV2     & 5/5 & .0600 \\
\midrule
Macro mean     & 27/50 & .0610 \\
\bottomrule
\end{tabular}
\end{table}

\FloatBarrier

To separate the calibration result from the development-selected query
fraction, Table~\ref{tab:q-sensitivity} freezes each candidate value of $q$
before final calibration.

\begin{table}[H]
\centering
\scriptsize
\caption{\textbf{Fixed query-fraction sensitivity at $\alpha=0.10$.}
Ten-dataset macro means; each row freezes $q$ before final calibration.}
\label{tab:q-sensitivity}
\resizebox{10cm}{!}{%
\begin{tabular}{rrrrrrrr}
\toprule
& & \multicolumn{2}{c}{Global} & \multicolumn{4}{c}{\sagecp} \\
\cmidrule(lr){3-4}\cmidrule(lr){5-8}
$q$ & Calls & MinCov & Gap & Coverage & MinCov & Gap & Size \\
\midrule
.10 & 1.110 & .537 & .405 & .900 & .888 & .027 & 2.176 \\
.20 & 1.247 & .691 & .262 & .900 & .894 & .015 & 2.202 \\
.25 & 1.328 & .735 & .220 & .902 & .895 & .015 & 2.217 \\
.30 & 1.417 & .764 & .193 & .901 & .894 & .014 & 2.205 \\
.40 & 1.622 & .804 & .157 & .901 & .895 & .013 & 2.207 \\
.50 & 1.871 & .832 & .134 & .901 & .896 & .011 & 2.207 \\
\bottomrule
\end{tabular}}
\end{table}

Across the full grid, \sagecp macro coverage remains 0.900--0.902,
$\mincov$ remains 0.888--0.896, and the route gap remains 0.011--0.027.
Increasing $q$ queries more sources and makes the two route populations less
imbalanced even under global calibration, but the global gap remains
0.134--0.405. Development selection therefore chooses the compute operating
point, whereas route-conditional calibration produces the reliability repair.

The source pool is genuinely heterogeneous. Table~\ref{tab:backbones} shows
that SigLIP2 is usually strongest, but not always. ViT-L/14 is selected first
on Caltech-101, and EuroSAT gives it a smaller set than SigLIP2. Even a strong
first source is not sufficient on every image, which leaves room for selective
escalation.

\begin{table}[H]
\centering
\scriptsize
\caption{Mean global LAC set size for each frozen source at $\alpha=0.10$.
Near-nominal coverage holds for all source columns. Bold indicates the smallest
set per dataset; the final column gives the development-selected first source.}
\label{tab:backbones}
\resizebox{10cm}{!}{%
\begin{tabular}{lrrrrl}
\toprule
Dataset & RN50 & ViT-B/32 & ViT-L/14 & SigLIP2 & First source \\
\midrule
Caltech-101 & 1.493 & .997 & \textbf{.936} & .958 & ViT-L/14 \\
DTD & 15.176 & 7.699 & 4.173 & \textbf{2.707} & SigLIP2 \\
EuroSAT & 6.359 & 4.208 & \textbf{3.500} & 4.655 & SigLIP2 \\
FGVC-Aircraft & 33.651 & 22.099 & 5.679 & \textbf{3.369} & SigLIP2 \\
Flowers-102 & 14.067 & 7.507 & 1.674 & \textbf{1.018} & SigLIP2 \\
Food-101 & 1.831 & 1.692 & .938 & \textbf{.919} & SigLIP2 \\
Oxford Pets & 1.255 & 1.088 & .928 & \textbf{.915} & SigLIP2 \\
Stanford Cars & 4.174 & 1.106 & .937 & \textbf{.916} & SigLIP2 \\
SUN397 & 4.144 & 2.581 & 1.865 & \textbf{1.745} & SigLIP2 \\
ImageNetV2 & 14.953 & 12.776 & 4.167 & \textbf{2.403} & SigLIP2 \\
\bottomrule
\end{tabular}
}
\end{table}

\paragraph{Empty-set sensitivity.}
LAC produces empty sets on five high-accuracy datasets. A deployment
sensitivity adds the routed top-1 label only when the conformal set is empty.
This deterministic enlargement cannot reduce coverage and requires no tuned
parameter. At $\alpha=0.10$, it removes all empty sets but raises \sagecp macro
coverage from 0.900 to 0.936 and its route gap from 0.025 to 0.057. The raw
conformal construction therefore remains primary; forced non-empty output is
an explicitly conservative deployment option, not an improved calibration
method.

\takeaway{The route-reliability result persists across three risk levels,
calibration budgets, every fixed query fraction, ten datasets, and four
heterogeneous sources. Finer groups remain valid but expose the predicted
finite-sample resolution cost; the non-empty sensitivity likewise shows why
seemingly benign post-processing must be reported rather than silently applied.}

\FloatBarrier
\subsection{RQ4: Do source savings survive deployment?}

Source-call reductions are useful only when they translate into lower
deployment cost. Figure~\ref{fig:deployment} and Table~\ref{tab:deployment}
show that they do: in the higher-budget FGVC-Aircraft and ImageNetV2 runs,
\sagecp reduces latency by 59.4\% and 61.0\% relative to full inference. These
reductions correspond to $2.46\times$ and $2.56\times$ speedups. Gross GPU
energy decreases by 54.4\% and 57.6\%, while estimates adjusted for resident
idle power decrease by 50.0\% and 53.7\%.

The routing overhead itself is small, averaging 0.062\,ms per image on
FGVC-Aircraft and 0.060\,ms per image on ImageNetV2. As an implementation check, end-to-end
sequential execution reproduces the routing decisions used in the offline
evaluation for all 1,536 examined inference cases. The mean absolute difference
between probabilities computed in the two settings remains below
$3.5\times10^{-8}$, and none of the measured images requires evaluation of the
fourth source.

Mean first-source latency is slightly higher than \sagecp latency, but the
paired intervals include zero, consistent with decode and cache variation being
larger than this small difference. The methods are therefore indistinguishable
in this comparison. All methods also occupy approximately 7 GiB because every
checkpoint remains resident, so the measured deployment benefit is lower
latency and GPU energy than full inference rather than lower resident memory.

One reduced-budget protocol extends the measurement to five domains in
Table~\ref{tab:deployment-expanded}. The result is consistent: \sagecp uses
1.062--1.109 sources per image, cuts full-inference latency by
58.8\%--63.4\%, and cuts gross GPU energy by 55.7\%--57.5\%. Every runtime
routing depth matches the frozen score archive across 1,280 inferences. The
largest probability discrepancy is $1.8\times10^{-7}$, and routing takes only
0.048--0.065 ms per image. This common reduced-budget protocol evaluates
cross-domain consistency, while the 256-image, three-repeat runs in
Table~\ref{tab:deployment} provide the higher-budget estimates for
FGVC-Aircraft and ImageNetV2.

\takeaway{Across five measured datasets, adaptive routing preserves most
full-ensemble route reliability and cuts full-inference latency by about 60\%.
The statistical routing repair adds negligible decision overhead but does not
reduce resident model memory.}

\section{Limitations and Scope}
\label{sec:limitations}

Our guarantee assumes that final calibration and test examples are
exchangeable conditional on development data. ImageNetV2 adds 1,000 labels and
a replicated ImageNet collection, but all three splits still come from
ImageNetV2. It is not a calibration-under-shift experiment. When calibration
and deployment distributions differ, a shift-aware method such as weighted
conformal prediction is needed \citep{tibshirani2019shift}.

Route-conditional validity is neither pointwise nor class conditional.
Per-class coverage remains poor on some fine-grained benchmarks, as it does for
the baselines. The primary binary target trades resolution for calibration
sample size. Our three- and four-depth study confirms both sides of that
trade-off. Rare fourth exits trigger exact full-set thresholds in 27 of 50
splits. They cover only 0.061\% of evaluated test images, but finer or
intersectional taxonomies can be inefficient without more calibration data.
The exact vacuity threshold is $n_h<1/\alpha-1$.

\begin{figure}[H]
\centering
\begin{subfigure}[t]{0.48\textwidth}
    \centering
    \includegraphics[width=\linewidth]{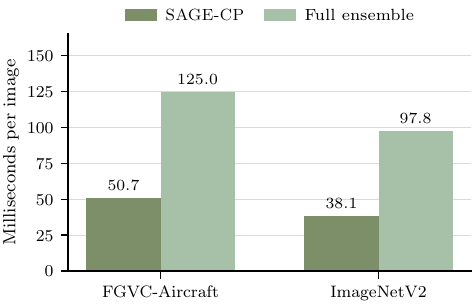}
    \subcaption{\textbf{End-to-end sequential latency.}}
    \label{fig:deployment-latency}
\end{subfigure}
\hfill
\begin{subfigure}[t]{0.48\textwidth}
    \centering
    \includegraphics[width=\linewidth]{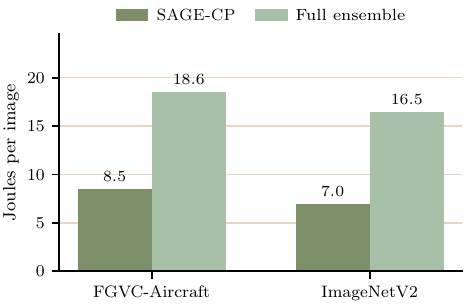}
    \subcaption{\textbf{Sampled gross GPU energy.}}
    \label{fig:deployment-energy}
\end{subfigure}
\caption{\textbf{True batch-one deployment cost on one RTX A4500.}
Measurements include real-image loading and decode, model-specific
preprocessing, host-to-device transfer, synchronized encoder inference and
softmax, routing, and Python orchestration. All four checkpoints and text
embeddings remain resident. Energy is estimated from sampled NVIDIA board
power.}
\label{fig:deployment}
\end{figure}


\begin{table}[H]
\centering
\small
\caption{\textbf{Primary higher-budget sequential deployment benchmark.}
Means use 256 test images
and three repeats. Paired differences compare \sagecp with full inference;
intervals are 95\% Student-$t$ intervals across repeats. GPU energy is sampled
board telemetry rather than whole-system energy. Latency is measured in
milliseconds per image, energy in joules per image, and memory in mebibytes
(MiB).}
\label{tab:deployment}
\resizebox{\textwidth}{!}{%
\begin{tabular}{llrrrrrr}
\toprule
Dataset & Method & Latency & Throughput & Sources & Gross energy & Peak memory & Paired diff. \\
& & ms/image & images/s & & J/image & MiB & ms/image \\
\midrule
\multirow{3}{*}{FGVC-Aircraft}
& First source & 52.71 & 18.97 & 1.000 & 8.07 & 6,994 & -- \\
& \sagecp & 50.70 & 19.73 & 1.121 & 8.47 & 6,994 & $-74.26\pm1.06$ \\
& Full ensemble & 124.95 & 8.00 & 4.000 & 18.56 & 6,994 & -- \\
\midrule
\multirow{3}{*}{ImageNetV2}
& First source & 40.24 & 24.85 & 1.000 & 6.95 & 7,006 & -- \\
& \sagecp & 38.14 & 26.22 & 1.098 & 6.99 & 7,006 & $-59.63\pm12.22$ \\
& Full ensemble & 97.77 & 10.23 & 4.000 & 16.47 & 7,006 & -- \\
\bottomrule
\end{tabular}}
\end{table}

\begin{table}[H]
\centering
\small
\caption{\textbf{Cross-domain deployment extension under one common reduced
budget.} Each method uses the same deterministic 128-image subset and two
repeats. The paired difference is \sagecp minus full latency with a 95\%
Student-$t$ interval across the two repeats. Energy is sampled gross NVIDIA
board energy; percentage columns compare \sagecp with full inference.}
\label{tab:deployment-expanded}
\resizebox{\textwidth}{!}{%
\begin{tabular}{lrrrrrrrrr}
\toprule
& \multicolumn{5}{c}{End-to-end latency} & \multicolumn{3}{c}{Gross GPU energy} & \\
\cmidrule(lr){2-6}\cmidrule(lr){7-9}
Dataset & \sagecp & Full & Paired diff. & Reduction & Speedup & \sagecp & Full & Reduction & Calls \\
& \multicolumn{2}{c}{ms/image} & ms/image & \% & $\times$ & \multicolumn{2}{c}{J/image} & \% & \\
\midrule
DTD           & 37.70 &  91.85 & $-54.15\pm48.29$ & 59.0 & 2.44 & 6.81 & 15.57 & 56.3 & 1.109 \\
FGVC-Aircraft & 45.98 & 125.54 & $-79.56\pm 5.16$ & 63.4 & 2.73 & 7.65 & 17.88 & 57.2 & 1.102 \\
Food-101      & 35.08 &  92.37 & $-57.29\pm 3.02$ & 62.0 & 2.63 & 6.48 & 15.24 & 57.5 & 1.062 \\
ImageNetV2    & 40.21 &  97.59 & $-57.38\pm26.87$ & 58.8 & 2.43 & 7.19 & 16.22 & 55.7 & 1.102 \\
SUN397        & 50.86 & 130.60 & $-79.75\pm15.70$ & 61.1 & 2.57 & 8.33 & 19.04 & 56.2 & 1.109 \\
\midrule
Macro mean    & 41.97 & 107.59 & --                & 60.8 & 2.56 & 7.29 & 16.79 & 56.6 & 1.097 \\
\bottomrule
\end{tabular}}
\end{table}




Our empirical study focuses on classification tasks with a fixed candidate label set known at inference time. Extending the evaluation to open-vocabulary detection, retrieval, and generation requires task-specific candidate spaces and definitions of semantic equivalence. The source pool covers heterogeneous architectures and training recipes within the dual-encoder image--text model family. Finally, because the repeated partitions overlap, their intervals summarize stability across partitions rather than independent replications.

We benchmark batch-one inference on a single RTX A4500 with all four models
resident, which isolates the computational savings from adaptive source
execution without introducing model-loading overhead. Absolute latency may vary
with the serving stack, batch size, compilation strategy, offloading policy,
and hardware. Energy measurements use sampled GPU board power and therefore do
not include CPU or whole-system consumption. Because all models remain
resident, adaptive routing reduces inference latency and GPU energy but does
not reduce model-residency memory. The five-dataset deployment extension uses
two repeats to assess whether the observed savings persist across domains under
the same hardware configuration. For the remaining five datasets, we report
source calls as a compute indicator rather than a direct latency measurement.

The prevalence-matched confidence taxonomy provides a strong non-routing
control and agrees with the operational route partition on 95.5\% of test
images, which explains their similar actual-route statistics. Other fixed,
label-free partitions can produce different efficiency and reliability
trade-offs. The claim is therefore operational: route calibration protects the
acquisition paths of the deployed predictor, while another taxonomy is
appropriate when a different population is the stated target.

Finally, \sagecp can enlarge prediction sets when global calibration allocates
errors unevenly. That is the statistical cost of the stronger group target.
When the routed system is already balanced, as on EuroSAT and Aircraft, the
empirical benefit is small.

\section{Conclusions}
\label{sec:conclusion}

An adaptive system creates more than a prediction. It also creates operational
populations: the inputs that stop early and those that require more evidence.
A marginal guarantee can be correct for the system and badly wrong for one of
these populations.

\sagecp addresses this mismatch through a calibration-aligned design: the routing policy is learned on development data and frozen before route-specific conformal thresholds are estimated from independent calibration data. Across ten datasets, this procedure moves mean worst-route
coverage from 0.623 to 0.887 without changing marginal coverage or routed
compute. It matches the route behavior of the full ensemble with 70.6\% fewer
source calls and smaller macro-mean sets. Compared with global calibration, its
sets are larger on six datasets as the cost of the stronger target. On five
measured datasets, it also reduces full-inference latency by
58.8\%--63.4\%.

The value of the method lies in this simplicity. Once adaptive inference makes
its routes observable, those routes can be protected without retraining the
predictor or surrendering its compute savings. The router decides how much
evidence to acquire. Calibration decides where the guarantee holds. When the
deployment target is the early-exit and escalated populations, the guarantee
should follow those routes rather than only their average.

\subsubsection*{Broader Impact Statement}

\sagecp is designed to reveal and correct route-dependent reliability failures
in adaptive prediction systems. Route-level coverage diagnostics help determine
whether early-exit and escalated inputs receive comparable protection, while
adaptive source acquisition can reduce the computational and energy costs of
evaluating a full model ensemble. However, route-conditional validity does not
by itself provide individual-level, class-conditional, demographic-group, or
distribution-shift guarantees. Its finite-sample guarantee also relies on
exchangeability between the final calibration data and deployment inputs. In
high-stakes applications, prediction sets should therefore support, rather than
replace, application-specific validation and human judgment. Responsible
deployment can combine \sagecp with relevant subgroup diagnostics, monitoring
of route prevalence and distribution shift, shift-aware or class-conditional
calibration when appropriate, and domain-expert oversight for consequential
decisions.

\bibliographystyle{tmlr}
\bibliography{refs}

\end{document}